\newtheorem{theorem}{Theorem}
\newenvironment{proof}{{\emph{Proof:}}}{$\hfill\blacksquare$}
\begin{document}
\title{\LARGE 
Latency Analysis of Consortium Blockchained Federated Learning}

%\author{ \parbox{3 in}{\centering Huibert Kwakernaak*
%         \thanks{*Use the $\backslash$thanks command to put information here}\\
%         Faculty of Electrical Engineering, Mathematics and Computer Science\\
%         University of Twente\\
%         7500 AE Enschede, The Netherlands\\
%         {\tt\small h.kwakernaak@autsubmit.com}}
%         \hspace*{ 0.5 in}
%         \parbox{3 in}{ \centering Pradeep Misra**
%         \thanks{**The footnote marks may be inserted manually}\\
%        Department of Electrical Engineering \\
%         Wright State University\\
%         Dayton, OH 45435, USA\\
%         {\tt\small pmisra@cs.wright.edu}}
%}

\author{Pengcheng Ren and Tongjiang Yan}
% <-this % stops a space
%%\thanks{}% <-this % stops a space
%%\thanks{$^{1}$Priyanshu Gandhi, 13116053, Department of Electronics and Communication Engineering}%
%\thanks{$^{a}$Hemant Kumar, 13116031, Department of Electronics and Communication Engineering}
\maketitle
\renewcommand{\thefootnote}{\fnsymbol{footnote}}
\footnotetext{
This work was supported by Fundamental Research Funds for the Central Universities (20CX05012A), the Major Scientific and Technological Projects of CNPC under Grant(ZD2019-183-008), and Shandong Provincial Natural Science Foundation of China (ZR2019MF070). (Corresponding author: Tongjiang Yan.)

The authors are with College of Science, China University of Petroleum, Qingdao 266555, China (email: z19090012@s.upc.edu.cn; yantoji@163.com).}

\thispagestyle{empty}
\pagestyle{empty}

%%%%%%%%%%%%%%%%%%%%%%%%%%%%%%%%%%%%%%%%%%%%%%%%%%%%%%%%%%%%%%%%%%%%%%%%%%%%%%%%
\begin{abstract}
A decentralized federated learning architecture is proposed to apply to the Businesses-to-Businesses scenarios by introducing the consortium blockchain in this paper. We introduce a model verification mechanism to ensure the quality of local models trained by participators. To analyze the latency of the system, a latency model is constructed by considering the work flow of the architecture. Finally the experiment results show that our latency model does well in quantifying the actual delays.
\end{abstract}

\begin{IEEEkeywords}
\textbf{Federated learning, consortium blockchain, model verification, latency.}
\end{IEEEkeywords}

%%%%%%%%%%%%%%%%%%%%%%%%%%%%%%%%%%%%%%%%%%%%%%%%%%%%%%%%%%%%%%%%%%%%%%%%%%%%%%%%
\section{Introduction}

Quantities of data have been generated continuously and become the new type of fuel that promotes the development of production. But the data security has to be considered during training a model cooperatively among different participators. In this regard, federated learning (FL) architecture was proposed \cite{ref1}. 

The original FL system needs a central sever to collect and distribute the model weights and gradient parameters (called the local model update). This centralized architecture may introduce some problems. Firstly, the global model that all participators receive depends on the single central server. If a failure happens on the server, each participator would get an inaccurate global model. Secondly, because all the model updates are stored in the sever. Once the server is attacked, the whole system would be collapsed.

In order to avoid the negative effects brought by the centralized architecture, the decentralized architecture was proposed by exploiting the blockchain instead of the server \cite{ref5}. 

FL based on the blockchain has been used on Internet of Things (IoT) \cite{ref2}, Internet of Vehicular (IoV) \cite{ref3}, Mobile Edge Computing (MEC) \cite{ref4} and so on. It supports not only these Devices-to-Devices (D2D) applications but also Businesses-to-Businesses (B2B) scenarios. Enterprises that own mass of data, such as banks, securities and hospitals, would like to discover the intrinsic value hidden in the data collaborating with others. In this paper, we present a FL based on blockchain for these B2B scenarios.

Considering the efficiency of the architecture, consortium blockchain should be used for the decentralized federated learning \cite{ref6,ref7}. Because only authorized peers can join the network and have access to the data stored in the distributed ledger on the consortium blockchain. The consensus protocol of the consortium blockchain is usually not PoW (Proof of Work), but consensus algorithms such as PBFT (Practical Byzantine Fault Tolerance) \cite{ref8} and Raft \cite{ref9}, which are more efficient and suitable for the multi-center network. 

The verification mechanism of the blockchain is often used to authenticate identities of peers. But the quality of models is especially important in the FL. Thus we introduce a model verification mechanism in order to ensure the quality of local model updates trained by participators.

The efficiency of the blockchained federated learning system is a key issue for practical application. Therefore, it is important to analyse the latency of the system. Most of the existing works explained the system delay by the ways of numerical simulation. These empirical analyses are too costly to obtain accurate results. Furthermore, the underlying networks for deploying permissioned blockchains have a great impact on analysis results, thus these results are not comparable and lack versatility \cite{ref10}. It is imperative to analyse theoretical latency to provide a quantitative model. The main contributions of this paper are as follows:

\begin{itemize}
    \item A decentralized federated learning based on consortium blockchain called CBFL is proposed to train a classification model by logistic regression with a horizontally partitioned dataset in B2B scenarios.
    
    \item We introduce a model verification mechanism for CBFL to validate the availability of the local model updates trained by participators.
    
    \item The theoretic latency model for CBFL system is divided into three parts. Each part involves several subdivisions for fine-grained analysis. 
    \item Through the latency model, we get an optimal throughput configuration for PBFT so as to improve the efficiency in practical application.
\end{itemize}

%%%%%%%%%%%%%%%%%%%%%%%%%%%%%%%%%%%%%%%%%%%%%%%%%%%%%%%%%%%%%%%%%%%%%%%%%%%%%%%%
%%%%%%%%%%%%%%%%%%%%%%%%%%%%%%%%%%%%%%%%%%%%%%%%%%%%%%%%%%%%%%%%%%%%%%%%%%%%%%%%

%%%%%%%%%%%%%%%%%%%%%%%%%%%%%%%%%%%%%%%%%%%%%%%%%%%%%%%%%%%%%%%%%%%%%%%%%%%%%%%%
%%%%%%%%%%%%%%%%%%%%%%%%%%%%%%%%%%%%%%%%%%%%%%%%%%%%%%%%%%%%%%%%%%%%%%%%%%%%%%%%
\section{CBFL Architecture and Operation}
%\subsection{Overview}
Let $E=\{E_{i}\}_{i=1}^{N_{E}}$ be a set of enterprises collaborating with each other in CBFL. The enterprises manage two types of nodes: compute nodes $\{C_{i}\}_{i=1}^{N_{E}}$ and communication peers $\{P_{i}\}_{i=1}^{N_{E}}$. The compute nodes have enough computing power to train the models. And communication nodes are responsible for maintaining the blockchain.

The CBFL architecture is organized as two layers: the model update layer and the blockchain layer as shown in Fig. 1. In the model update layer, compute nodes train the local models using its own raw data samples locally and upload the local model updates to the corresponding communication peers in the blockchain layer. Each peer verifies all the local model updates gathered from other peers and operates the consensus algorithm to generate a new block. Finally, the local model updates recorded in the newest block are aggregated locally by each compute node. So all the participators achieve data collaboration without leaking the raw data.

\begin{figure}
\centering
\includegraphics[height=5.2cm,width=8.7cm]{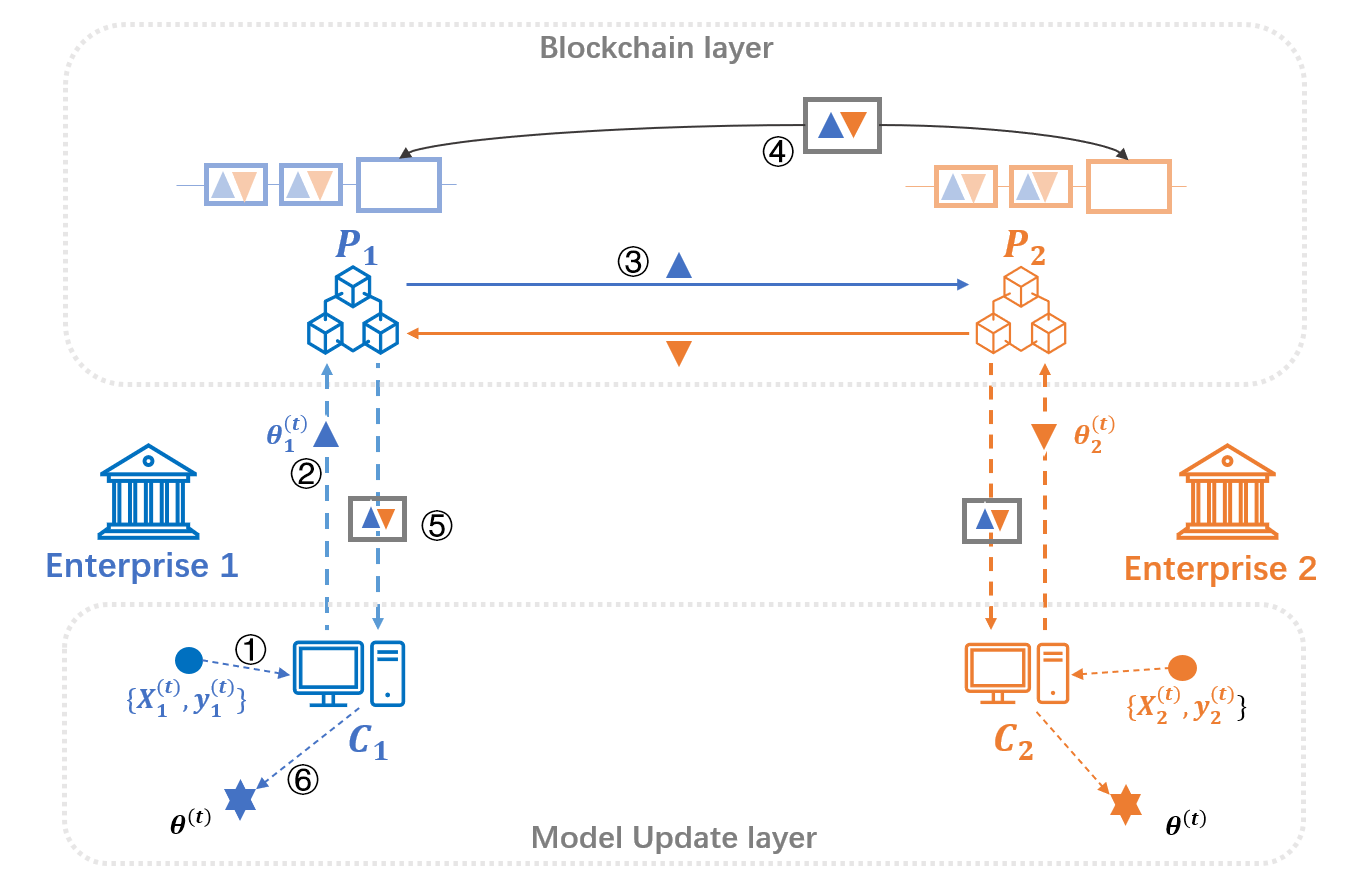}
\centering
\caption{CBFL Architecture.}
\end{figure}

\subsection{Model Update Layer}

Suppose that the $i$-th enterprise $E_i$ owns a set of data $D_i$ which includes $n$ features and $N_i$ samples, where $i \in \{1, 2, \cdots,N_{E}\}$. Let $D=\bigcup_{i=1}^{N_{E}}D_{i}$ be the entire dataset of all enterprises in CBFL, where $|D|=N_D=\sum_{i=1}^{N_E} N_{i}$. 

Our CBFL architecture focuses on the classification problem by using the logistic regression with the horizontally partitioned data \cite{ref11}. Let $\left\{x_{k}, y_{k}\right\} \in D_{i}$ be the $k$-th data sample, where $x_{k} \in \mathbb{R}^{n}$ and $y_{k}\in\{-1,1\}$. The goal of logistic regression is to train a linear model for classification by solving the following optimization problem \cite{ref12}:

\begin{equation}\label{(1)}
\min \frac{1}{~N_{D}} \sum_{i=1}^{N_{E}} \sum_{k=1}^{N_{i}} f_{k}\left(\omega ; x_{k}, y_{k}\right),
\end{equation}
where $\omega$ is the model parameter vector and
$$
f_{k}(\omega) \triangleq f_{k}\left(\omega ; x_{k}, y_{k}\right)=\log \left(1+\exp \left(y_{k} \cdot \omega^{T} x_{k}\right)\right).
$$

In order to solve the optimization problem (1), the model is locally trained with the stochastic variance reduced gradient(SVRG) \cite{ref1}:
\begin{small}
\begin{equation}
w_{i}^{t, \ell} \! = \! w_{i} ^ { t \! - \! 1, \ell } \! - \! \frac { \beta }{ N_{i} } \left( \left[ \nabla f_{k} \left( w_{i} ^ { t \! - \! 1, \ell } \right) \! - \! \nabla f_{k} \left( w ^ {\ell} \right) \right] \! + \! \nabla f \left( w^{\ell} \right) \right) \! , \!
\end{equation}
\end{small}where $\omega_{i}^{t, \ell}\in\mathbb{R}^{n}$ is the local weight at the $t$-th iteration of the $\ell$-th cycle and $\eta_{t}>0$ is the step size. Let $\omega_{i}^{\ell}$ be the local weight after the last local iteration of the $\ell$-th cycle. So $C_{i}$ gets the local model update $\left\{ \omega_{i}^{\ell },\left\{ \nabla f_{k} \left( \omega^{\ell} \right) \right\} \right\} \triangleq tx$. Then $C_{i}$ aggregates all the $tx$s to get the global model by
\begin{equation}
\omega^{\ell} = \omega^{\ell-1} + \sum_{i=1}^{N_{E}} \frac{N_{i}}{N_{D}} \left( \omega_{i}^{\ell} - \omega^{\ell-1} \right),
\end{equation}
where $\omega^{\ell}$ is the global model weight of the $\ell$-th cycle.

\subsection{Blockchain Layer with Model Verification Mechanism}

In the blockchain layer, the size of each block is set as $h+\delta_{m} N_{B}$, where $h$ is the block header size, $\delta_{m}$ is the single $tx$ size and $N^{B}$ is the maximum number of $tx$s within a block. It is more efficient to get the consensus by using a consortium blockchain instead of a public blockchain. Besides, peers in a consortium blockchain are authorized, data stored in the block are more secure.

The consensus protocol is the core component of a blockchain. In this paper, PBFT is adopted to get the consensus for the consortium blockchain network. It can get the consensus among $N_{P}$ peers with $f$ faulty peers, where $f=\frac{N_{P}-1}{3}$. 

PBFT includes three phases as shown in Fig. 2. A leader $L$ was chosen among all peers beforehand to create a candidate block in which $tx$s are sorted by timestamp. Then $L$ disseminates the candidate block to all other peers in a pre-prepare message at the pre-prepare stage. If a peer receives and accepts the message, it stores the message and enters the prepare phase broadcasting prepare messages. Then peers wait for a quorum of prepare messages, i.e., at least $2f+1$ prepare messages which match the stored pre-prepare message. In the third phase peers broadcast commit messages to all others. Then, if a peer collects another quorum of commit messages which match the previously collected prepare messages, it will commit the state transition and reply to the compute node \cite{ref8}. 

\begin{figure}
\centering\includegraphics[height=3.6cm,width=8.5cm]{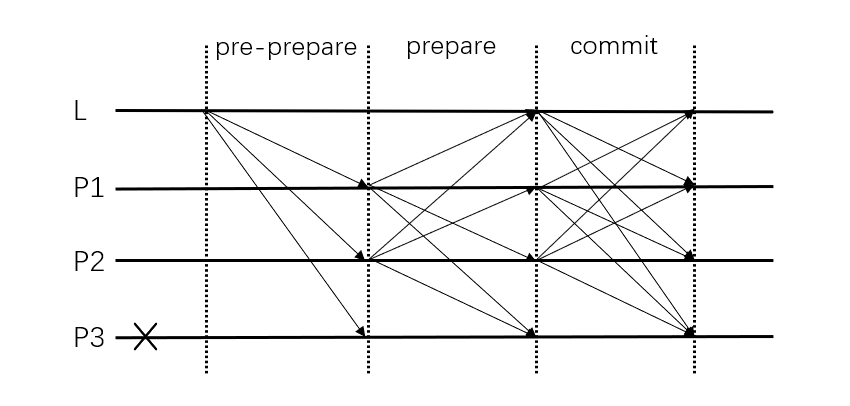}
\caption{Three phases of PBFT.}
\end{figure}

Replying on the blockchain network, we can build a secure data sharing platform where enterprises can exchange model updates to achieve secure data collaboration. Compute nodes can get all the local updates from the newest block and aggregate locally instead of downloading the global model update from the central server, which is more robust than the centralized FL. In the B2B application scenario, the number of peers is not too many. So PBFT can achieve the consensus efficiently. Furthermore, PBFT needs fewer computing resources than PoW and can avoid forking \cite{ref13}.

In a normal blockchain, peers verify the validity of a transaction with the digital signature technology. But with the federal learning protection privacy mechanism, some unreal or even malicious participants[14] will provide mendacious local model updates with some made-up data, which causes trouble for the global model. In our CBFL, the communication peers verify the $tx$s not only by checking the digital signatures but also by verifying the quality of models. 

We leverage the classification accuracy to quantify the performance of the local model updates. More specifically, the accuracy is denoted by the proportion of correctly classified samples. Denote that each $E_{i}$ owns $T$ testing instances for quantifying the accuracy of the local model updates. The classification accuracy $e_{j}$ of the $j$-th received local model update can be given by $e_{j}=\frac{n_{j}}{T}$, where $n_{j}$ is the number of the correctly classified samples. When the communication peer $P_{i}$ receives local model updates from other peers, it would admit the $j$-th local model update whose classification accuracy satisfies $e_{j} \geq e_{0}$, where $e_{0}$ is the threshold predetermined.

With the model verification mechanism, only the models trained by truthful data can be recorded in the distributed ledger of the blockchain. In this way, unnecessary oscillations can be avoided in the process of training the global model, which improves the efficiency of the whole system by reducing the training rounds.

\subsection{One-Cycle CBFL Operation}

As depicted in Fig. 1, the CBFL operation can be described by the following six steps:
\begin{enumerate}[Step 1]
\item Local model update: Each $C_{i}$ computes (2) with its own data to get the local model update $tx$.
\item Local model upload: $C_{i}$ uploads the $tx$ to its corresponding $P_{i}$.
\item Cross-verification: $P_{i}$ broadcasts the $tx$ obtained from $C_{i}$. At the same time, $P_{i}$ verifies the $tx$s received from other peers with our model verification mechanism. 
\item Consensus: The verified $tx$s are recorded in the candidate block by the leader $L$. The candidate block doesn't generate until reaching the block size $h+\delta_{m} N_{B}$ or maximum waiting time $\tau$. The leader $L$ multicasts the candidate block to all peers to start the three-phase PBFT to get the consensus among all peers.
\item Global model download: When a peer $P_{i}$ receives $2f+1$ commit messages, it sends the newest block which stores all participators’ $tx$s to the corresponding $C_{i}$ as the reply.
\item Global model update: Every $C_{i}$ computes the global model update by using (3) with all $tx$s recorded in the block.
\end{enumerate}
Step1 to Step6 is the one-cycle process of CBFL. This operation process doesn’t stop until $\left|\omega^{\ell}-\omega^{\ell-1}\right| \leq \varepsilon$.

%%%%%%%%%%%%%%%%%%%%%%%%%%%%%%%%%%%%%%%%%%%%%%%%%%%%%%%%%%%%%%%%%%%%%%%%%%%%%%%%
%%%%%%%%%%%%%%%%%%%%%%%%%%%%%%%%%%%%%%%%%%%%%%%%%%%%%%%%%%%%%%%%%%%%%%%%%%%%%%%%
\section{One-Cycle Operation Latency Analysis}
We aim to build a latency analysis model to quantify the time consumption of the CBFL. Before  building the latency model, some reasonable assumptions are made as follows:
\begin{itemize}
\item The compute nodes and communication peers have stable and enough computing resources for model training and verification.
\item The communication peers have certain communication and storage capabilities to ensure $tx$s sharing. And peers are defined to dispose the received messages on a FIFO basis, while the processing time at each peer follows the exponential distribution with the mean $\mu$.
\item The arrival of new $tx$s follows the Poisson Process with the arrival rate $\lambda$.
\end{itemize}
Let $T_{0}^{\ell}$ be the total time during $\ell$-th cycle process at a fixed enterprise $E_{0}$ and $$T_{0}^{\ell} = T_{update}+ T_{commun}+ T_{consensus},$$where $T_{update}$, $T_{consensus}$ and $T_{commun}$ are  model update, consensus and communication delays respectively.

1) Model update latency: The model update delays are generated by Step 1 and Step 6. Let $\delta_{d}$ be a single data sample size and $f_{c}$ be the clock speed. So the local model update latency in Step 1 is evaluated as $T_{local,0}^{\ell}=\delta_{d} N_{i} / f_{c}$ \cite{ref5}. And the global model update latency $T_{global,0}^{\ell}$ in Step 6 can be given as $T_{global,0}^{\ell}=\delta_{m} N_{B} / f_{c}$ \cite{ref5}, where $\delta_{m}$ is the size of a local model update $tx$.The model update latency can be calculated by
\begin{align}
T_{update} = T_{local,0}^{\ell} + T_{global,0}^{\ell}.
\end{align}

2) Consensus latency: The consensus delays are brought by Step 3 and Step 4. And the latency of the PBFT consensus is fully considered according to its three-phase work flow.

Let $N(\tau)$ be the number of arrived $tx$s within the max waiting time $\tau$. So the leader $L$ sends the pre-prepare message to other peers when the number of arrived $tx$s reaches $b$ according to the conditions above-mentioned in Step3, where $b=max\{ N(\tau), N_{B}\}$. The collection, verification and batch processes of $tx$s at $L$ can be modeled by the $ M/M/1 $ queue. According to the Little's law, The average waiting time of each $tx$ can be formulated as $\frac{1}{\mu - \lambda}$. Thus, the total latency of the pre-prepare phase can be given as
\begin{align}
    T_{preprepare} = \frac{max\{ N(\tau), N_{B}\}}{\mu - \lambda}.\notag
\end{align}

For an arbitrary fixed $P_{o}$, its process of receiving prepare messages is the Poisson process with the intensity $\lambda$. Thus, the time lag $t_{i}$ between two adjacent prepare messages follows the exponential distribution with mean $1 / \lambda$. The average waiting time of $P_{o}$ can be denoted as 
\begin{align}
T_{wait}=E[\sum_{i=1}^{2f} t_{i}]=\sum_{i=1}^{2f} E\left[t_{i}\right]=\frac{2f}{\lambda}.\notag
\end{align}
The total processing time in this phase is calculated as
\begin{align}
T_{process} = \frac{2f+1}{ \mu },\notag
\end{align}
so the latency of prepare phase is $$ T_{prepare} = T_{wait}+T_{process}.$$
The latency of the commit phase is similar to the prepare delay.

The total latency of consensus phase is 
\begin{align}
T_{consensus} = T_{preprepare} + T_{prepare} + T_{commit}.
\end{align}
\begin{figure}
\centering\includegraphics[scale = 0.3]{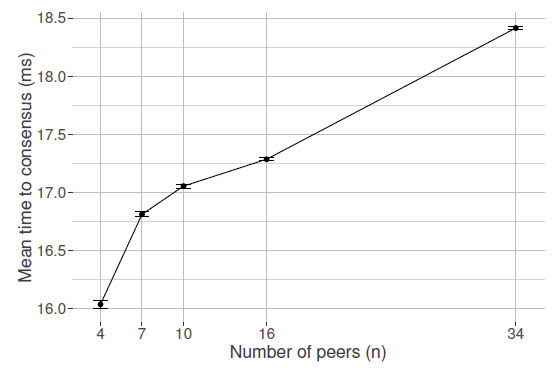}
\centering
\caption{Mean time to consensus for large number of peers.}
\end{figure}
3) Communication latency: The communication delays are contributed by Step 2 and Step 5. The local model upload latency in Step 2 is computed as $$T_{up,0}^{\ell}=\delta_{m} /\left[W_{up} \log _{2}\left(1+\gamma_{up}\right)\right],$$ where $W_{up}$ is the bandwidth between $C_{0}$ and $P_{0}$, $\gamma_{up}$ is the signal-to-noise ratio \cite{ref5}. Similarly, global model download delay in Step 5 is calculated as $$T_{dn,0}^{\ell}=\left(h+b \delta_{m}\right) /\left[W_{d n} \log _{2}\left(1+\gamma_{d n}\right)\right].$$ So the latency of communication can be calculated as
\begin{align}
T_{commun} = T_{up,0}^{\ell} + T_{dn,0}^{\ell}.
\end{align}

\begin{theorem}
If the algorithms for training local model updates and global model updates are confirmed, $T_{local,0}^{\ell}$ and  $T_{glocal,0}^{\ell}$ are constants.
$T_{up,0}^{\ell}$ and $T_{dn,0}^{\ell}$ are also constants when the underlying network is determined. Thus, the total latency of CBFL can be modeled as
\begin{align}
T_{0}^{\ell} &= T_{update}+ T_{commun}+ T_{consensus}\notag\\
&=T_{constant}+\frac{(b-4f)\lambda + 4f\mu}{\lambda (\mu - \lambda)} + \frac{4f+2}{\mu}.
\end{align}
\end{theorem}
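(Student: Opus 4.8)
The plan is to assemble $T_{0}^{\ell}$ by substituting the component latencies already derived in the preceding text, lumping the update and communication contributions into a single constant and carrying out a careful phase-by-phase sum for the consensus term. First I would invoke the hypotheses: once the local and global training algorithms are fixed, $T_{local,0}^{\ell}=\delta_{d}N_{i}/f_{c}$ and $T_{global,0}^{\ell}=\delta_{m}N_{B}/f_{c}$ depend only on fixed problem parameters and are therefore constants; likewise, with the underlying network determined, the Shannon-rate expressions for $T_{up,0}^{\ell}$ and $T_{dn,0}^{\ell}$ are constants. I would then collect $T_{constant}\triangleq T_{update}+T_{commun}=T_{local,0}^{\ell}+T_{global,0}^{\ell}+T_{up,0}^{\ell}+T_{dn,0}^{\ell}$ and set this block aside, so that all $\lambda,\mu$-dependence is confined to $T_{consensus}$.

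Next I would substitute the per-phase expressions into $T_{consensus}=T_{preprepare}+T_{prepare}+T_{commit}$. From the $M/M/1$ analysis of the leader's queue, $T_{preprepare}=b/(\mu-\lambda)$ with $b=\max\{N(\tau),N_{B}\}$. From the Poisson-arrival argument for prepare messages together with the exponential-service assumption, $T_{prepare}=T_{wait}+T_{process}=2f/\lambda+(2f+1)/\mu$. The key structural observation is that the commit phase is governed by the same mechanics, namely collection of a quorum of $2f+1$ matching messages under an identical Poisson-arrival and exponential-service model, so that $T_{commit}=T_{prepare}$. Summing the three phases then gives $T_{consensus}=b/(\mu-\lambda)+4f/\lambda+(4f+2)/\mu$.

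Finally I would perform the algebraic consolidation that produces the stated closed form. Placing the two $\lambda$-dependent terms over the common denominator $\lambda(\mu-\lambda)$ yields $b/(\mu-\lambda)+4f/\lambda=[b\lambda+4f(\mu-\lambda)]/[\lambda(\mu-\lambda)]=[(b-4f)\lambda+4f\mu]/[\lambda(\mu-\lambda)]$, which is precisely the middle term of the claimed formula; the residual $(4f+2)/\mu$ matches the final term, and adding $T_{constant}$ completes the identity in the statement.

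I expect the computation to be routine once the phase-by-phase formulas are granted, so the only genuine obstacle is justifying the symmetry claim $T_{commit}=T_{prepare}$ cleanly, that is, verifying that the commit-phase quorum collection reproduces both the $2f/\lambda$ waiting term and the $(2f+1)/\mu$ processing term of the prepare phase under the FIFO, exponential-service, and Poisson-arrival assumptions, rather than being merely ``similar.'' Beyond that, I would double-check the boundary case $N(\tau)<N_{B}$, in which $b=N_{B}$, to ensure the same value of $b$ is used consistently in both $T_{preprepare}$ and the download term $T_{dn,0}^{\ell}$ throughout the derivation.
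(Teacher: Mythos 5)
Your proposal is correct and follows essentially the same route as the paper's proof: lump $T_{update}+T_{commun}$ into $T_{constant}$, take $T_{commit}=T_{prepare}$ so that $T_{consensus}=\frac{b}{\mu-\lambda}+2\left(\frac{2f}{\lambda}+\frac{2f+1}{\mu}\right)$, and combine the $\lambda$-dependent terms over the common denominator $\lambda(\mu-\lambda)$ to obtain the stated closed form. Your added scrutiny of the $T_{commit}=T_{prepare}$ symmetry is reasonable but goes no further than the paper, which likewise asserts the commit phase is ``similar'' to the prepare phase without separate derivation.
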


\begin{proof}
According to the work flow of CBFL in Section \uppercase\expandafter{\romannumeral2}, $T_{0}^{\ell}$ is the sum of $T_{update}$, $T_{commun}$ and $T_{consensus}$. Let $T_{constant}= T_{update}+ T_{commun}$. And
\begin{align}
T_{consensus} &= T_{preprepare} + T_{prepare} + T_{commit}\notag\\
              &= \frac{b}{\mu - \lambda} + 2(\frac{2f}{\lambda}+\frac{2f+1}{\mu})\notag\\
              &= \frac{(b-4f)\lambda + 4f\mu}{\lambda (\mu - \lambda)} + \frac{4f+2}{\mu}.
\end{align}
\end{proof} 

\begin{theorem}
With the case where the leader starts the PBFT when the maximum size of a block is satisfied, i.e. $b =  N_{B}$, the optimal $\lambda ^{*}$ for PBFT can be given by
\begin{align}
    \lambda ^{*} = \frac{-8f\mu + 4\mu \sqrt{f N_{B} }} { 2 ( N_{B} - 4f )}.
\end{align}
\end{theorem}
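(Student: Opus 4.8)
The plan is to treat the total latency $T_{0}^{\ell}$ from Theorem 1 as a function of the arrival rate $\lambda$ and minimize it over the feasible region $0 < \lambda < \mu$ (the stability condition of the $M/M/1$ queue), after imposing $b = N_{B}$. First I would substitute $b = N_{B}$ into the expression from Theorem 1 and observe that both $T_{constant}$ and the term $\frac{4f+2}{\mu}$ are independent of $\lambda$. Hence minimizing $T_{0}^{\ell}$ reduces to minimizing the single rational function
$$g(\lambda) = \frac{(N_{B}-4f)\lambda + 4f\mu}{\lambda(\mu - \lambda)},$$
so the optimum $\lambda^{*}$ is determined by the first-order condition $g'(\lambda) = 0$.

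Next I would differentiate $g$ using the quotient rule. Writing the numerator as $(N_{B}-4f)\lambda + 4f\mu$ and the denominator as $\lambda\mu - \lambda^{2}$, the numerator of $g'(\lambda)$ is $(N_{B}-4f)(\lambda\mu-\lambda^{2}) - \left[(N_{B}-4f)\lambda + 4f\mu\right](\mu - 2\lambda)$. Setting this to zero and expanding, the cross terms involving $(N_{B}-4f)\lambda\mu$ cancel, and the stationarity condition collapses to the quadratic
$$(N_{B}-4f)\lambda^{2} + 8f\mu\,\lambda - 4f\mu^{2} = 0.$$
I would then apply the quadratic formula to solve for $\lambda$.

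The step I expect to be the main obstacle — really the only nontrivial computation — is simplifying the discriminant so the stated closed form emerges. The discriminant is $(8f\mu)^{2} + 4(N_{B}-4f)(4f\mu^{2})$, and the key cancellation is that the $64f^{2}\mu^{2}$ produced by the first term is exactly eliminated by the $-16f\mu^{2}\cdot 4f = -64f^{2}\mu^{2}$ inside the second term, leaving $16f\mu^{2}N_{B}$, whose square root is the clean $4\mu\sqrt{fN_{B}}$. Taking the root with the plus sign (the minus-sign root is negative when $N_{B} > 4f$, hence outside the feasible interval and discarded) yields
$$\lambda^{*} = \frac{-8f\mu + 4\mu\sqrt{fN_{B}}}{2(N_{B}-4f)}.$$
Finally I would confirm this is a genuine minimum rather than a maximum or saddle, either by a sign analysis of $g'$ across the unique admissible critical point in $(0,\mu)$ or by checking $g''(\lambda^{*}) > 0$, completing the argument that $\lambda^{*}$ optimizes the consensus latency.
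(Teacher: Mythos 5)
Your proposal is correct and takes essentially the same route as the paper: the paper likewise differentiates $T_{consensus}$ with $b=N_{B}$ to get the identical quadratic numerator $(N_{B}-4f)\lambda^{2}+8f\mu\lambda-4f\mu^{2}$, establishes minimality via the positive second derivative $\frac{2N_{B}}{(\mu-\lambda)^{3}}+\frac{8f}{\lambda^{3}}$ (global convexity on $0<\lambda<\mu$), and solves the quadratic. Your discriminant simplification to $16f\mu^{2}N_{B}$ and your discarding of the negative root simply make explicit what the paper compresses into ``the optimum $\lambda^{*}$ is directly derived.''
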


\begin{proof} 
According to (8), we can get the first derivative and the second derivative of $T_{consensus}$ with respect to $\lambda$ as follows
\begin{align}
T_{consensus}^{'} &= \frac{(N_{B}-4f)\lambda^{2} + 8f\mu\lambda - 4f\mu^{2}}{\lambda^{2} (\mu - \lambda)^{2}}.\notag\\
T_{consensus}^{''} &= \frac{2N_{B}}{(\mu - \lambda)^{3}}+\frac{8f}{(\lambda)^{3}}.\notag
\end{align}
Thus the $T_{consensus}$ is convex for $\lambda $. The optimum $\lambda ^{*}$ is directly derived.
\end{proof} 

%%%%%%%%%%%%%%%%%%%%%%%%%%%%%%%%%%%%%%%%%%%%%%%%%%%%%%%%%%%%%%%%%%%%%%%%%%%%%%%%
%%%%%%%%%%%%%%%%%%%%%%%%%%%%%%%%%%%%%%%%%%%%%%%%%%%%%%%%%%%%%%%%%%%%%%%%%%%%%%%%
\section{Numerical Results and Conclusion}
\begin{figure}
\centering
\subfigure[Latency with varying $\lambda$]{
\begin{minipage}[t]{0.5\linewidth}
\centering
\includegraphics[width=1.8in]{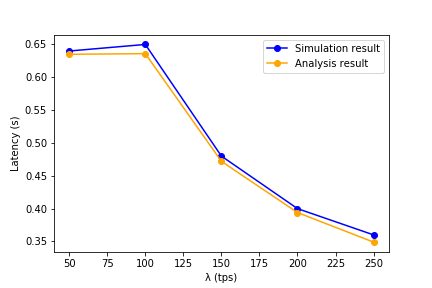}
\end{minipage}
}%
\subfigure[Latency results compare]{
\begin{minipage}[t]{0.5\linewidth}
\centering
\includegraphics[width=1.8in]{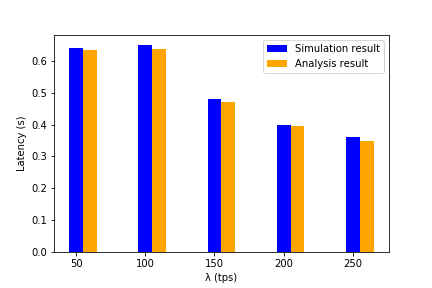}
\end{minipage}
}%
\centering
\caption{Blockchain latency for the transactions arrival rate $\lambda$ }
\end{figure}

The time consumption data was fitted with Exponential, Weibull, Gamma, Hypoexponential, LogNormal and Pareto distributions using MLE (Maximum Likelihood Estimation) technique in \cite{ref14}. The best-fit model of $T_{prepare}$ is Weibull distribution ($shape = 2.092$, $scale = 0.8468$). For another, $T_{prepare}$ is the sum of independent identically exponential distributed random variables given in Section \uppercase\expandafter{\romannumeral3}. Thus it follows Gamma distribution. While the Gamma distribution is a special kind of Weibull distribution, our latency analyses of $T_{prepare}$ and $T_{commit}$ are suitable.

As the number of peers increases, the probability of failure peers occurrence will also increase. In Fig. 3 \cite{ref14}, the mean latency to consensus increases with the augment of the number of peers. This is similar to what our model (7) shows. 

Fig. 4 shows the impact of the transactions arrival rate $\lambda$ on the blockchain average completion latency. The relationship between the latency and $\lambda$ is a approximate inverse proportional function as shown in Fig. 4-a, which is consistent with our latency model. In Fig. 4-b, we compare the latency results from the simulation \cite{ref10} and the latency model. In order to ensure comparability, the same configurations in \cite{ref10} are adopted. Let $N_{B}=100$, $N_{P}=4, f=1$ and the transactions arrival rate $\lambda$ starts from 50tps to 250tps in this experiment. The model predicts the experimental measurements with an error lower than 3.1\%. 

In the B2B scenarios, each enterprise can enhance the computing and communication equipment to improve model update and communication efficiency. So it is crucial to optimize CBFL with respect to latency, computing and storage requirements by improving the underlying networks, which can reduce the consistent delays according to our latency analysis model.

In conclusion, the consensus latency is a bottleneck of the whole system, especially the latency of waiting for ordering. According to (7), the latency of the system is proportional to the number of faulty nodes and inversely proportional to the system TPS. The throughput can be adjusted to reduce latency according to actual situation. In addition, faulty nodes can be reduced by establishing a reward system and a node selection mechanism.

\section{Acknowledgement}
The authors thank professors Debiao He of Wuhan University and Xiaohong Huang of Beijing University of Posts and telecommunications for their valuable suggestions to improve the  the innovation of this paper.

\balance
\end{document}